
\documentclass{article}

\usepackage[nonatbib, preprint]{nips_2018}



\usepackage{tcolorbox}
\usepackage{times}
\usepackage{cite}
\usepackage{subfigure}
\usepackage{graphicx}
\usepackage{amsmath}
\usepackage{bm}
\usepackage{url}
\usepackage{stmaryrd}
\usepackage{balance}
\usepackage{amssymb}
\usepackage{pgfplots}
\usepackage{pifont}
\usepackage[usenames,dvipsnames]{pstricks}
\usepackage{epsfig}
\usepackage{booktabs}
\usepackage{pgffor}
\usepackage{tikz}
\usepackage{multirow}
\usepackage{bbm}

\newcommand{\mb}{\mathbf}

\newtheorem{theo}{\textsc{Theorem}}

\newtheorem{proof}{\textsc{Proof}}
\newtheorem{defn}{\textsc{Definition}}

\newcommand{\our}{\textsc{LNN}}

\newcommand{\linemodel}{\textsc{LINE}}
\newcommand{\deepwalk}{\textsc{DeepWalk}}

\newcommand{\hpe}{\textsc{HPE}}
\newcommand{\walklets}{\textsc{WalkLets}}
\newcommand{\app}{\textsc{App}}
\newcommand{\mf}{\textsc{MF}}
\newcommand{\bpr}{\textsc{Bpr}}

\usepackage{algorithm}
\usepackage{algorithmic}

\title{Deep Loopy Neural Network Model for Graph Structured Data Representation Learning}



\author{Jiawei~Zhang\\
$^\star$IFM Lab, Florida State University, FL, USA\\
jzhang@cs.fsu.edu}

\begin{document}

\maketitle


\begin{abstract}

Existing deep learning models may encounter great challenges in handling graph structured data. In this paper, we introduce a new deep learning model for graph data specifically, namely the \textit{deep loopy neural network}. Significantly different from the previous deep models, inside the \textit{deep loopy neural network}, there exist a large number of loops created by the extensive connections among nodes in the input graph data, which makes model learning an infeasible task. To resolve such a problem, in this paper, we will introduce a new learning algorithm for the \textit{deep loopy neural network} specifically. Instead of learning the model variables based on the original model, in the proposed learning algorithm, errors will be back-propagated through the edges in a group of extracted spanning trees. Extensive numerical experiments have been done on several real-world graph datasets, and the experimental results demonstrate the effectiveness of both the proposed model and the learning algorithm in handling graph data.

\end{abstract}

\section{Introduction}\label{sec:intro}

Formally, a loopy neural network denotes a neural network model involving loops among neurons in its architecture. \textit{Deep loopy neural network} is a novel learning model proposed for graph structured data specifically in this paper. Given a graph data $G = (\mathcal{V}, \mathcal{E})$ (where $\mathcal{V}$ and $\mathcal{E}$ denote the node and edge sets respectively), the architecture of \textit{deep neural nets} constructed for $G$ can be described as follows:

\begin{defn}
(\textbf{Deep Loopy Neural Network}): Formally, we can represent a \textit{deep loopy neural network} model constructed for network $G = (\mathcal{V}, \mathcal{E})$ as $\mathcal{G} = (\mathcal{N}, \mathcal{L})$, where $\mathcal{N}$ covers the set of neurons and $\mathcal{L}$ includes the connections among the neurons defined based on graph $G$. 
\end{defn}

More specifically, the neurons covered in set $\mathcal{N}$ can be categorized into several layers: (1) input layer $\mathcal{X}$, (2) hidden layers $\mathcal{H} = \bigcup_{l=1}^k \mathcal{H}^{(l)}$, and (3) output layer $\mathcal{Y}$, where $\mathcal{X} = \{\mb{x}_i\}_{v_i \in \mathcal{V}}$, $\mathcal{H}^{(l)} = \{\mb{h}_i^{(l)}\}_{v_i \in \mathcal{V}}, \forall l \in \{1, 2, \cdots, k\}$, $\mathcal{Y} = \{\mb{y}_i\}_{v_i \in \mathcal{V}}$ and $k$ denotes the hidden layer depth. Vector $\mb{x}_{i} \in \mathbb{R}^n$, $\mb{h}_i^{(l)} \in \mathbb{R}^{m^{(l)}}$ and $\mb{y}_{i} \in \mathbb{R}^d$ denote the feature, hidden state (at layer $l$) and label vectors of node $v_i \in \mathcal{V}$ in the input graph respectively. Meanwhile, the connections among the neurons in $\mathcal{L}$ can be divided into (1) intra-layer neuron connections $\mathcal{L}_{a}$ (which connect neurons within the same layers), and (2) inter-layer neuron connections $\mathcal{L}_{e}$ (which connect neurons across layers), where $\mathcal{L}_{a} = \bigcup_{l=1}^k \{(\mb{h}^{(l)}_i, \mb{h}^{(l)}_j)\}_{(v_i, v_j) \in \mathcal{E}}$ and $\mathcal{L}_{e} = \mathcal{L}_{x,h_1}  \cup \mathcal{L}_{h_1,h_2} \cup \cdots \cup \mathcal{L}_{h_k,y}$. In the notation, $\mathcal{L}_{x,h_1}$ covers the connections between neurons in the input layer and the first hidden layer, and so forth for the other neuron connection sets.

In Figure~\ref{fig:framework}, we show an example of a \textit{deep loopy neural network} model constructed for the input graph as shown in the left plot. In the example, the input graph can be represented as $G = (\mathcal{V}, \mathcal{E})$, where $\mathcal{V} = \{v_1, v_2, \cdots, v_6\}$ and $\mathcal{E} = \{(v_1, v_2), (v_1, v_3), (v_1, v_4), (v_2, v_3),(v_3, v_4),(v_4, v_5),(v_5, v_6)\}$. The constructed \textit{deep loopy neural network} model involves $k$ layers and the intra-layer neuron connections mainly exist in these hidden layers respectively.

Formally, given the node input feature vector $\mb{x}_i$ for node $v_i$, we can represents its hidden states at different hidden layers and the output label as follows:
\begin{equation}
\begin{cases}
\mb{h}^{(1)}_i &= \sigma \big(\mb{W}^{x} \mb{x}_i + \mb{b}^{x}  + \sum_{v_j \in \Gamma(v_i)} (\mb{W}^{h_1}  \mb{h}_j + \mb{b}^{h_1} ) \big),\\
&\cdots\\
\mb{h}^{(k)}_i &= \sigma \big(\mb{W}^{h_{k-1}, h_k} \mb{h}^{(k-1)}_i + \mb{b}^{h_{k-1}, h_k} + \sum_{v_j \in \Gamma(v_i)} (\mb{W}^{h_k} \mb{h}_j + \mb{b}^{h_k} ) \big),\\
{\mb{y}}_i &= \sigma(\mb{W}^{y} \mb{h}^{(k)}_i + \mb{b}^{y}),
\end{cases}
\end{equation}
where set $\Gamma(v_i) = \{v_j | v_j \in \mathcal{V} \land (v_i, v_j) \in \mathcal{E}\}$ denotes the neighbors of node $v_i$ in the input graph.

Compared against the true label vector of nodes in the network, e.g., $\hat{\mb{y}}_i$ for $v_i \in \mathcal{V}$, we can represent the introduced loss by the model on the input graph as
\begin{equation}
E(G) = \sum_{v_i \in \mathcal{V}} E(v_i) = \sum_{v_i \in \mathcal{V}} loss( \hat{\mb{y}}_i, {\mb{y}}_i ),
\end{equation}
where different loss functions can be adopted here, e.g., \textit{mean square loss} or \textit{cross-entropy loss}.

By minimizing the loss function, we will be able to learn the variables involved in the model. By this context so far, most of the deep neural network model training is based on the error back propagation algorithm. However, applying the error back propagation algorithm to train the \textit{deep loopy neural network} model will encounter great challenges due to the extensive variable dependence relationships created by the loops, which will be illustrated in great detail later.

\begin{figure*}[!t]
 \centering    
 \begin{minipage}[l]{0.8\columnwidth}
  \centering
    \includegraphics[width=1.0\textwidth]{./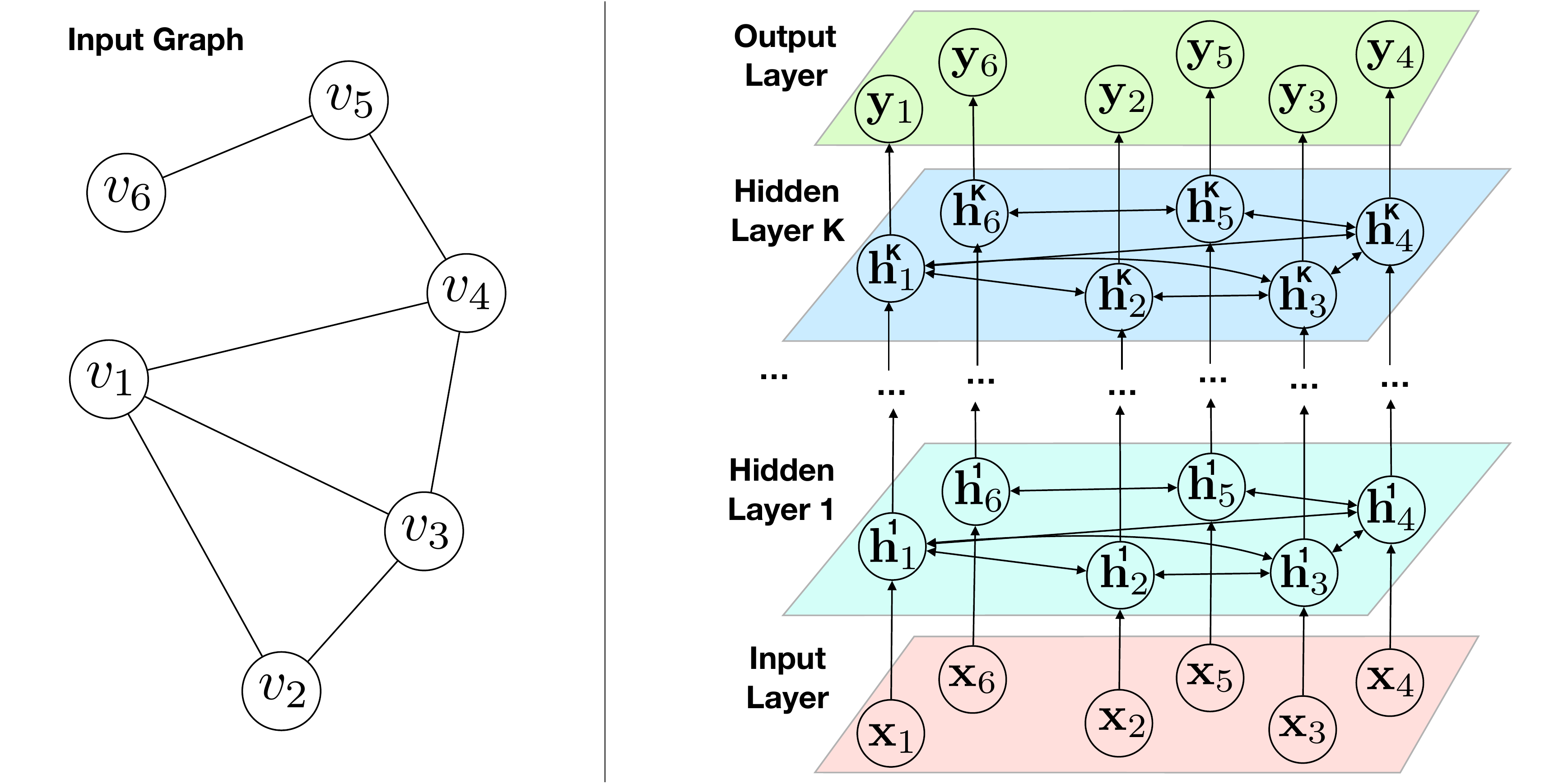}
 \end{minipage}
\caption{Overall Architecture of Deep Loopy Neural Network Model.}\label{fig:framework}
\end{figure*}

The following part of this paper is organized as follows. In Section~\ref{sec:relatedwork}, we will introduce the existing related works to this paper. We will analyze the challenges in learning \textit{deep loop neural networks} with error back-propagation algorithm in Section~\ref{sec:analysis}, and a new learning algorithm will be introduced in Section~\ref{sec:method}. Extensive numerical experiments will be provided to evaluate the model performance in Section~\ref{sec:experiment}, and finally we will conclude this paper in Section~\ref{sec:conclusion}.

\section{Related Works} \label{sec:relatedwork}

Two research topics are closely related to this paper, including deep learning and network representation learning, and we will provide a brief overview of the existing papers published on these two topics as follows.

\noindent \textbf{Deep Learning Research and Applications}: The essence of deep learning is to compute hierarchical features or representations of the observational data \cite{GBC16, LBH15}. With the surge of deep learning research and applications in recent years, lots of research works have appeared to apply the deep learning methods, like deep belief network \cite{HOT06}, deep Boltzmann machine \cite{SH09}, deep neural network \cite{J02, KSH12} and deep autoencoder model \cite{VLLBM10}, in various applications, like speech and audio processing \cite{DHK13, HDYDMJSVNSK12}, language modeling and processing \cite{ASKR12, MH09}, information retrieval \cite{H12, SH09}, objective recognition and computer vision \cite{LBH15}, as well as multimodal and multi-task learning \cite{WBU10, WBU11}.

\noindent \textbf{Network Embedding}: Network embedding has become a very hot research problem recently, which can project a graph-structured data to the feature vector representations. In graphs, the relation can be treated as a translation of the entities, and many translation based embedding models have been proposed, like TransE \cite{BUGWY13}, TransH \cite{WZFC14} and TransR \cite{LLSLZ15}. In recent years, many network embedding works based on random walk model and deep learning models have been introduced, like Deepwalk \cite{PAS14}, LINE \cite{TQWZYM15}, node2vec \cite{GL16}, HNE \cite{CHTQAH15} and DNE \cite{WCZ16}. Perozzi et al. extends the word2vec model \cite{MSCCD13} to the network scenario and introduce the Deepwalk algorithm \cite{PAS14}. Tang et al. \cite{TQWZYM15} propose to embed the networks with LINE algorithm, which can preserve both the local and global network structures. Grover et al. \cite{GL16} introduce a flexible notion of a node's network neighborhood and design a biased random walk procedure to sample the neighbors. Chang et al. \cite{CHTQAH15} learn the embedding of networks involving text and image information. Chen et al. \cite{CS16} introduce a task guided embedding model to learn the representations for the author identification problem. 
\section{Learning Challenges Analysis of Deep Loopy Neural Network}\label{sec:analysis}

In this part, we will analyze the challenges in training the \textit{deep loop neural network} with traditional error back propagation algorithm. To simplify the settings, we assume the \textit{deep loop neural network} has only one hidden layer, i.e., $k=1$. According to the definition of the \textit{deep loopy neural network} model provided in Section~\ref{sec:intro}, we can represent the inferred labels for graph nodes, e.g., $v_i \in \mathcal{V}$, as vector ${\mb{y}}_i = [{y}_{i,1}, {y}_{i,2}, \cdots, {y}_{i,d}]^\top$, and its $j_{th}$ entry $\mb{y}_{i}(j)$ (or ${y}_{i,j}$) can be denoted as \begingroup\makeatletter\def\f@size{9.5}\check@mathfonts
\begin{equation}
\begin{cases}
\mb{y}_{i}(j) &\hspace{-10pt}= \sigma \big( \sum_{e = 1}^{m} \mb{W}^{y}(e, j) \cdot \mb{h}_{i}(e) + \mb{b}^{y}(j) \big),\\
 \mb{h}_{i}(e) &\hspace{-10pt}= \sigma \big( \sum_{c = 1}^n \mb{W}^{x}(c,e) \cdot \mb{x}_i(c) + \mb{b}^{x}(e) + \sum_{v_p \in \Gamma(v_i)} \sum_{f = 1}^m \mb{W}^h(f,e) \cdot \mb{h}_p(f) + \mb{b}^h(e) \big).
\end{cases}
\end{equation}\endgroup

Here, we will use the \textit{mean square error} as an example of the loss function, and the loss introduced by the model for node $v_i \in \mathcal{V}$ compared with the ground truth label $\hat{\mb{y}}_i$ can be represented as
\begin{equation}
E(v_i) = \frac{1}{2} \left\| \mb{y}_i - \hat{\mb{y}}_i \right\|_2^2 = \frac{1}{2} \sum_{j = 1}^d \left(\mb{y}_i(j) - \hat{\mb{y}}_i(j) \right)^2.
\end{equation}

\subsection{Learning Output Layer Variables}

The variables involved in the \textit{deep loopy neural network} model can be learned by error back propagation algorithm. For instance, here we will use SGD as the optimization algorithm. Given the node $v_i$ and its introduced error function $E(v_i)$, we can represent the updating equation for the variables in the output layer, e.g., $\mb{W}^y(e,j)$ and $\mb{b}^y(j)$, as follows:
\begin{equation}
\begin{cases}
\mb{W}^y_{\tau}(e,j) &= \mb{W}^y_{\tau-1}(e,j) - \eta^w_{\tau} \cdot \frac{\partial E(v_i)}{\partial \mb{W}^y_{\tau-1}(e,j)}  ,\\
\mb{b}^y_{\tau}(j) &= \mb{b}^y_{\tau-1}(j) - \eta^b_{\tau} \cdot \frac{\partial E(v_i)}{\partial \mb{b}^y_{\tau-1}(j) } .
\end{cases}
\end{equation}
where $\eta^w_{\tau}$ and $\eta^b_{\tau}$ denote the learning rates in updating $\mb{W}^y$ and $\mb{b}^y$ at iteration $\tau$ respectively. 

According to the derivative chain rule, we can represent the partial derivative terms $\frac{\partial E(v_i)}{\partial \mb{W}^y_{\tau-1}(e,j)}$ and $\frac{\partial E(v_i)}{\partial \mb{b}^y_(j) }$ as follows respectively:
\begin{equation}
\begin{cases}
\frac{\partial E(v_i)}{\partial \mb{W}^y(e,j)} &=  \frac{\partial E(v_i)}{\partial \mb{y}_i(j)} \cdot \frac{\partial \mb{y}_i(j)}{\partial \mb{z}^h_i(j)} \cdot \frac{\partial \mb{z}^h_i(j)}{\partial  \mb{W}^y(e,j)}= \big( \mb{y}_i(j) - \hat{\mb{y}}_i(j) \big) \cdot \mb{y}_i(j) \big( 1- \mb{y}_i(j) \big) \cdot \mb{h}_i(e),\vspace{5pt}\\
\frac{\partial E(v_i)}{\partial \mb{b}^y(j) } &=  \frac{\partial E(v_i)}{\partial \mb{y}_i(j)} \cdot \frac{\partial \mb{y}_i(j)}{\partial \mb{z}^h_i(j)} \cdot \frac{\partial \mb{z}^h_i(j)}{\partial  \mb{b}^y(j) }= \big( \mb{y}_i(j) - \hat{\mb{y}}_i(j) \big) \cdot \mb{y}_i(j) \big( 1- \mb{y}_i(j) \big) \cdot 1,
\end{cases}
\end{equation}
where term $\mb{z}^h_i(j) = \sum_{e = 1}^{m} \mb{W}^{y}(e, j) \cdot \mb{h}_{i}(e) + \mb{b}^{y}(j)$.

\subsection{Learning Hidden Layer and Input Layer Variables}

Meanwhile, when updating the variables in the hidden and output layers, we will encounter great challenges in computing the partial derivatives of the error function regarding these variables. Given two connected nodes $v_i, v_j \in \mathcal{V}$, where $(v_i, v_j) \in \mathcal{E}$, from the graph, we have the representation of their hidden state vectors $\mb{h}_i$ and $\mb{h}_j$ as follows:
\begin{align}
\mb{h}_i &=  \sigma \big(\mb{W}^x \mb{x}_i + \mb{b}^x + \mb{W}^h \mb{h}_j + \mb{b}^h + \sum_{v_k \in \Gamma(v_i) \setminus \{v_j\}} (\mb{W}^h \mb{h}_{k} + \mb{b}^h) \big),\\
\mb{h}_j &=   \sigma \big(\mb{W}^x \mb{x}_j + \mb{b}^x + \mb{W}^h \mb{h}_i + \mb{b}^h + \sum_{v_k' \in \Gamma(v_j) \setminus \{v_i\}} (\mb{W}^h \mb{h}_{k'} + \mb{b}^h) \big).
\end{align} 
We can observe that $\mb{h}_i$ and $\mb{h}_j$ co-rely on each other in the computation, whose representation will be involved in an infinite recursive definition. When we compute the partial derivative of error function regarding variable $\mb{W}^x$, $\mb{b}^x$ or $\mb{W}^h$, $\mb{b}^h$, we will have an infinite partial derivative sequence involving $\mb{h}_i$ and $\mb{h}_j$ according to the chair rule. The problem will be much more serious for connected graphs, as illustrated by the following theorem.

\begin{theo}\label{theo:dependency}
Let $G$ denote the input graph. If graph $G$ is connected (i.e., there exist a path connecting any pairs of nodes in the graph), for any node $v_i$ in the graph, its hidden state vector $\mb{h}_i$ will be involved in the hidden state representation of all the other nodes in the graph.
\end{theo}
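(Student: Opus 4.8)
The plan is to argue directly from the recursive defining equations of the hidden states, using the fact that connectivity supplies a walk between $v_i$ and any other node. Fix an arbitrary node $v_j \neq v_i$. Since $G$ is connected there is a walk $v_i = v_{p_0}, v_{p_1}, \ldots, v_{p_\ell} = v_j$ with $(v_{p_t}, v_{p_{t+1}}) \in \mc{E}$ for every $t$. The key structural observation is that whenever $(v_a, v_b) \in \mc{E}$, the defining equation of $\mb{h}_b$ contains, inside the argument of $\sigma$, the explicit summand $\mb{W}^h \mb{h}_a + \mb{b}^h$; hence $\mb{h}_a$ literally occurs in the one-step expression for $\mb{h}_b$, and by substituting that expression into any larger expression, $\mb{h}_a$ occurs in the unrolled expansion of anything in which $\mb{h}_b$ occurs.

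To make this precise I would introduce a relation $\preceq$ on $\mc{V}$: write $v_a \preceq v_b$ if $\mb{h}_a$ appears once the mutually recursive definition of $\mb{h}_b$ is unrolled to some finite depth (equivalently, $v_a$ reaches $v_b$ in the dependency graph of the fixed-point system). By the observation above, adjacency implies $v_a \preceq v_b$ (unroll one step), and $\preceq$ is transitive because unrolling is compositional: if $\mb{h}_a$ shows up after unrolling $\mb{h}_b$ and $\mb{h}_b$ shows up after unrolling $\mb{h}_c$, then substituting the first unrolling into the second exhibits $\mb{h}_a$ in the unrolling of $\mb{h}_c$. The theorem then follows by a short induction on the walk length $\ell$: the base case $\ell = 1$ is the adjacency case just noted, and in the inductive step, assuming $v_i \preceq v_{p_{\ell-1}}$ and noting $v_{p_{\ell-1}}$ is adjacent to $v_{p_\ell} = v_j$ so that $v_{p_{\ell-1}} \preceq v_j$, transitivity gives $v_i \preceq v_j$. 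Since $v_j$ was arbitrary, $\mb{h}_i$ is involved in the hidden representation of every other node.

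The main obstacle is not depth but care: the hidden states are defined by a mutually recursive system with no finite closed form, so the argument risks circularity unless one commits to a fixed meaning of ``is involved in'' — here, ``appears at some finite depth of the unrolled computation graph.'' Once that relation is fixed and shown to contain adjacency and to be transitive, connectivity does the rest essentially for free. A secondary subtlety worth one sentence is that the bias $\mb{b}^h$ is shared across edges, so ``involved in'' must be read as genuine dependence on the \emph{vector} $\mb{h}_i$ (through the $\mb{W}^h \mb{h}_i$ term), not merely on shared parameters; this is exactly what the summand-based argument establishes.
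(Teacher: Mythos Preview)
Your proof is correct and is essentially the contrapositive of the paper's own argument: the paper proceeds by contradiction, assuming $\mb{h}_i$ is absent from $\mb{h}_j$ and then propagating that absence outward to all $1$-hop, $2$-hop, $\ldots$ neighbors of $v_j$ until it reaches the $p$-hop neighbor $v_i$ itself, whereas you induct directly along a walk from $v_i$ to $v_j$ using transitivity of your relation $\preceq$. Your explicit formalization of ``involved in'' via finite-depth unrolling is more careful than the paper's informal treatment, but the underlying idea is the same.
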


\begin{proof}
The Theorem can be proved via contradiction. 

Here, we assume hidden state vector $\mb{h}_i$ is not involved in the hidden representation of a certain node $v_j$, i.e., $\mb{h}_j$. Formally, given the neighbor set $\Gamma(v_j)$ of node $v_j$ in the network, we know that vector $\mb{h}_j$ is defined based on the hidden state vectors $\{\mb{h}_k\}_{v_k \in \Gamma(v_i)}$. Therefore, we can assert that vector $\mb{h}_i$ should also not be involved in the representation of all the nodes in $\Gamma(v_j)$. Viewed in this perspective, we can also show that $\mb{h}_i$ is not involved in the hidden state representations of the 2-hop neighbors of node $v_j$, as well as the 3-hop, and so forth. 

Considering that graph $G$ is connected, we know there should exist a path of length $p$ connecting $v_j$ with $v_i$, i.e., $v_i$ will be a $p$-hop neighbor of node $v_j$, which will contract the claim $\mb{h}_i$ is not involved in the hidden state representations of the $p$-hop neighbors of node $v_j$. Therefore, the assumption we make at the beginning doesn't hold and we will prove the theorem.
\end{proof}

According to the above theorem, when we try to compute the derivative of the error function regarding variable $\mb{W}^x$, we will have
\begin{equation}
\frac{\partial E(v_i)}{\partial \mb{W}^x} = \frac{\partial E(v_i)}{\partial \mb{y}_i} \cdot \frac{\partial \mb{y}_i}{\partial \mb{z}^h_i} \cdot \frac{\partial \mb{z}^h_i} {\partial \mb{h}_i} \cdot \bigg( \frac{\partial \mb{h}_i} {\partial \mb{W}^x} + \sum_{v_j \in \mathcal{V}} \frac{\partial \mb{h}_i} {\partial \mb{h}_j} \cdot  \Big( \frac{\partial \mb{h}_j} {\partial \mb{W}^x}  + \sum_{v_k \in \mathcal{V}} \frac{\partial \mb{h}_j} {\partial \mb{h}_k} \cdot \big( \cdots  \big)  \Big) \bigg)
\end{equation}
Due to the recursive definition of the hidden state vector $\{\mb{h}_i\}_{v_i \in \mathcal{V}}$ in the network, the partial derivative of term $\frac{\partial E(v_i)}{\partial \mb{W}^x}$ will be impossible to compute mathematically. The partial derivative sequence will extend to an infinite length according to Theorem~\ref{theo:dependency}. Similar phenomena can be observed when computing the partial derivative of the error function regarding variables $\mb{b}^x$ or $\mb{W}^h$, $\mb{b}^h$.

\section{Proposed Method}\label{sec:method}

To resolve the challenges introduced in the previous section, in this part, we will propose an approximation algorithm to learn the \textit{deep loopy neural network} model. Given the complete model architecture (which is also in a graph shape), to differentiate it from the input graph data, we will name it as the \textit{model graph} formally, the neuron vectored involved in which are called the neuron nodes by default. From the \textit{model graph}, we propose to extract a set of tree structured model diagrams rooted at certain neuron states in the model graph. Model learning will be mainly performed on these extracted rooted trees instead.

\subsection{g-Hop Model Subgraph and Rooted Spanning Tree Extraction}

Given the \textit{deep loopy neural network} model graph, involving the input, output and hidden state variables and the projections parameterized by the variables to be learned, we propose to extract a set of \textit{g-hop subgraph} (\textit{g-subgraph}) around certain target neuron node from it, where all the neuron nodes involved are within $g$ hops from the target neuron node.

\begin{defn}
(\textbf{g-hop subgraph}): Given the \textit{deep loopy neural network} model graph $\mathcal{G} = (\mathcal{N}, \mathcal{L})$ and a target neuron node, e.g., ${\mb{n}}\in \mathcal{N}$, we can represent the extracted \textit{g-subgraph} rooted at ${\mb{n}}$ as $\mathcal{G}_{\mb{n}} = (\mathcal{N}_{\mb{n}}, \mathcal{L}_{\mb{n}})$, where $\mathcal{N}_{\mb{n}} \subset \mathcal{N}$ and $\mathcal{L}_{\mb{n}} \subset \mathcal{L}$. For all the nodes in set $\mathcal{N}_{\mb{n}}$, formed by links in $\mathcal{L}_{\mb{y}_i}$, there exist a directed path of length less than $g$ connecting the neuron node ${\mb{n}}$ with them.
\end{defn}

\begin{figure*}[!t]
 \centering    
 \begin{minipage}[l]{1.0\columnwidth}
  \centering
    \includegraphics[width=1.0\textwidth]{./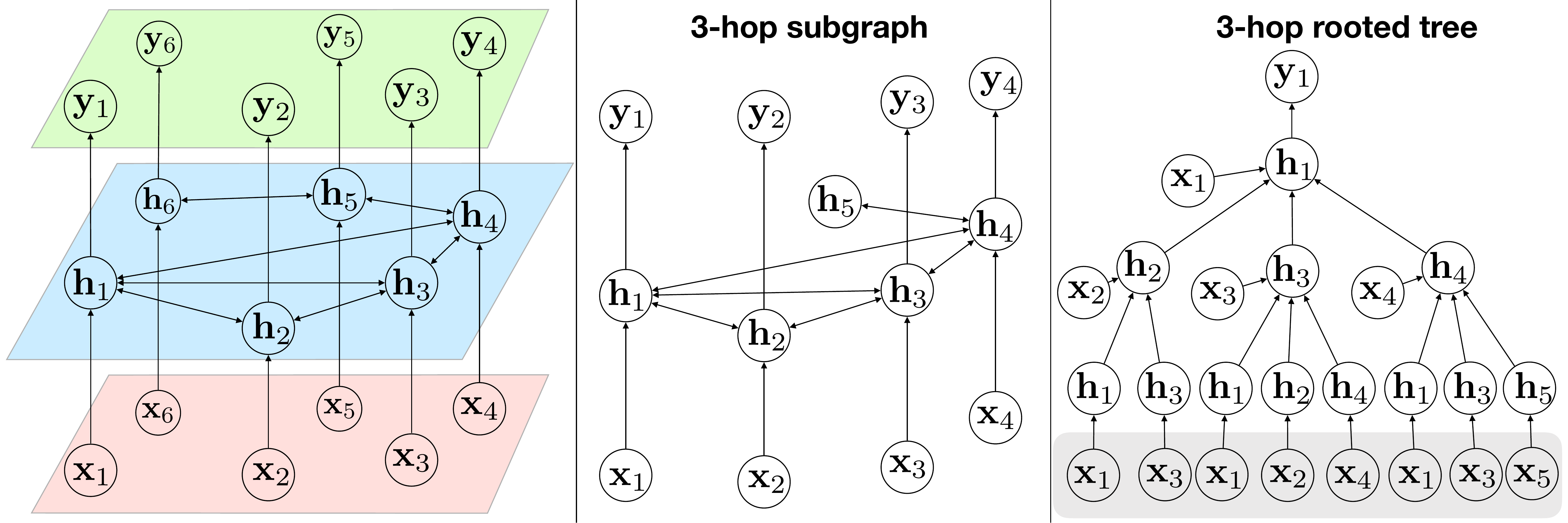}
 \end{minipage}
\caption{Structure of 3-Hop Subgraph and Rooted Spanning Tree at $\mb{y}_1$. (The bottom feature nodes in the gray component attached to the tree leaves are append for model learning purposes.)}\label{fig:example}
\end{figure*}

For instance, from the \textit{deep loopy neural network} model graph (with $1$ hidden layer) as shown in the left plot of Figure~\ref{fig:example}, we can extract a \textit{3-subgraph} rooted at neuron node $\mb{y}_1$ as shown in the central plot. In the sub-graph, it contains the feature, label and hidden state vectors of nodes $v_2, v_3, v_4$ as well as the hidden state vector of $v_5$, whose feature and label vectors are not included since they are 4-hops away from $\mb{h}_1$. 

In the model graph, the link direction denote the forward propagation direction. In the model learning process, the error information will propagate from the output layer, i.e., the label neuron nodes, backward to the hidden state and input neuron nodes along the reversed direction of these links. For instance, given the target node $\mb{y}_1$, its error can be propagated to $\mb{h}_1, \mb{h}_2, \cdots, \mb{h}_5$ and $\mb{x}_1, \mb{x}_2, \mb{x}_3, \mb{x}_4$, but cannot reach $\mb{y}_2$, $\mb{y}_3$ and $\mb{y}_4$. Meanwhile, to resolve the recursive partial derivative problem introduced in the previous section, in this paper, we propose to further extract a \textit{g-hop rooted spanning tree} (\textit{g-tree}) from the \textit{g-subgraph}. The extracted \textit{g-tree} will be acyclic and all the involved links are from the children nodes to the parent nodes only.

\begin{defn}
(\textbf{g-hop rooted spanning tree}): Given the extracted \textit{g-subgraph} around the target neuron node $\mb{n}$, i.e., $\mathcal{G}_{\mb{n}} = (\mathcal{N}_{\mb{n}}, \mathcal{L}_{\mb{n}})$, we can represent its \textit{g-tree} as $\mathcal{T}_{\mb{n}} = (\mathcal{S}_{\mb{n}}, \mathcal{R}_{\mb{n}}, \mb{n})$, where $\mb{n}$ is the root, sets $\mathcal{S}_{\mb{n}} \subset \mathcal{N}_{\mb{n}}$ and $\mathcal{R}_{\mb{n}} \subset \mathcal{L}_{\mb{n}}$. All the links in $\mathcal{P}_{\mb{n}}$ are pointing from the children nodes to the parent nodes.
\end{defn}

For instance, in the right plot of Figure~\ref{fig:example}, we display an example of the extracted \textit{3-tree} rooted at neuron node $\mb{y}_1$. From the plot, we observe that all the label vectors are removed, since there exist no directed edges from them to the root node. Among all the nodes, $\mb{h}_1$ is $1$-hop away from $\mb{y}_1$, $\mb{x}_1$, $\mb{h}_2$, $\mb{h}_3$ and $\mb{h}_4$ are $2$-hop away from $\mb{y}_1$, and all the remaining nodes are $3$ hops away from $\mb{h}_1$. Given any two pair of connected nodes in the \textit{3-tree}, e.g., $\mb{x_1}$ and $\mb{y}_1$ (or $\mb{h}_4$ and $\mb{h}_5$), the edges connecting them clearly indicate the variables to be learned via error back propagation across them. When learning the \textit{loopy neural network}, instead of using the whole network, we propose to back propagate the errors from the root, e.g., $\mb{y}_1$, to the remaining nodes in the extracted \textit{3-tree}.

\begin{theo}
Given $g = \infty$, the learning process based on \textit{g-tree} will be identical as learning based on the original network. Meanwhile, in the case when $g$ is a finite number and $g \ge \mbox{diameter}(\mathcal{G})$, the \textit{g-tree} of any nodes will actually cover all the neuron nodes and variables to be learned.
\end{theo}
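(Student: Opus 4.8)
I would split the argument into the two regimes.

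\textbf{The case $g = \infty$.} The plan is first to identify the $\infty$-subgraph rooted at a target output neuron $\mb{y}_i$: it consists of every neuron node from which a directed (forward-propagation) path reaches $\mb{y}_i$, which by Theorem~\ref{theo:dependency} and the connectedness of $G$ is precisely the collection of all hidden-state neurons $\{\mb{h}_j^{(l)}\}$ together with all input neurons $\{\mb{x}_j\}$ of the model graph. The $\infty$-tree is then the infinite spanning tree obtained by unrolling this subgraph from $\mb{y}_i$: attach $\mb{h}_i^{(k)}$, and under each hidden node already placed attach a fresh copy of each of its in-neighbours, repeating indefinitely. The crux is a level-by-level correspondence: a root-to-node path of length $t$ in the unrolled tree matches exactly one summand of the $t$-fold unfolding of the chain-rule expression for $\partial E(v_i)/\partial \mb{W}^x$ (and likewise for $\mb{b}^x$, $\mb{W}^h$, $\mb{b}^h$ and the cross-layer and output parameters) displayed in Section~\ref{sec:analysis} — each tree edge $(\mb{h}_b,\mb{h}_a)$ supplies the Jacobian factor $\partial \mb{h}_a/\partial \mb{h}_b$, and the local contribution at each hidden node of the tree supplies the $\partial \mb{h}_j/\partial \mb{W}^x$ terms that appear at every nesting level of that recursive sum. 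Accumulating contributions over the (infinite) tree therefore reproduces, term for term, the formal infinite derivative series of the original network, so the SGD updates, hence the whole learning procedure, coincide. The main obstacle here is pinning down this bijection rigorously — in particular verifying that passing to a spanning tree is exactly what discards the redundant re-traversals of the $2$-cycles inside a hidden layer while leaving every genuinely distinct derivative chain present and counted once.

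\textbf{The case $g$ finite with $g \ge \mathrm{diam}(\mathcal{G})$.} Since a $g$-tree is by definition a spanning tree of the $g$-subgraph, it is enough to show that the $g$-subgraph rooted at $\mb{y}_i$ already contains every hidden and input neuron (and that taking $g$ larger adds nothing). I would bound the length of a shortest directed path in the model graph from an arbitrary neuron to $\mb{y}_i$: for $\mb{h}_j^{(l)}$, route along the intra-layer edges — which mirror the edges of $G$ — from $\mb{h}_j^{(l)}$ to $\mb{h}_i^{(l)}$ in at most $\mathrm{dist}_G(v_j,v_i)\le\mathrm{diam}(G)$ steps, then climb $\mb{h}_i^{(l)}\to\mb{h}_i^{(l+1)}\to\cdots\to\mb{h}_i^{(k)}\to\mb{y}_i$ in $k-l+1$ steps; for $\mb{x}_j$ prepend the step $\mb{x}_j\to\mb{h}_j^{(1)}$. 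In every case the distance is at most $\mathrm{diam}(G)+k+1$, and taking $v_i,v_j$ at distance $\mathrm{diam}(G)$ in $G$ and counting vertical versus horizontal moves shows the bound is attained, so $\mathrm{diam}(\mathcal{G})\ge\mathrm{diam}(G)+k+1$. Hence $g\ge\mathrm{diam}(\mathcal{G})$ exceeds the distance from every hidden and input neuron to $\mb{y}_i$, so all of them lie in the $g$-subgraph and therefore in the $g$-tree. Finally, since each parameter block is shared across all node indices ($\mb{W}^x,\mb{b}^x$ on every input-to-hidden edge, $\mb{W}^{h},\mb{b}^{h}$ on every intra-layer edge, $\mb{W}^{h_{l-1},h_l},\mb{b}^{h_{l-1},h_l}$ on every layer-to-layer edge, and $\mb{W}^y,\mb{b}^y$ on every hidden-to-output edge), the presence of all these neurons forces at least one edge carrying each block into the tree, so every variable to be learned is covered. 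The only delicate point here is fixing the inequality relating $\mathrm{diam}(\mathcal{G})$, $\mathrm{diam}(G)$ and the layer depth $k$; after that it is bookkeeping.
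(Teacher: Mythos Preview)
The paper explicitly omits any proof of this theorem (``The proof to the above theorem will not be introduced here due to the limited space''), so there is no paper argument to compare against; your proposal is the only argument on the table.

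Your strategy is sound in outline. For $g=\infty$, identifying the infinite rooted tree with the full chain-rule unrolling displayed in Section~\ref{sec:analysis} is exactly the right correspondence, and you correctly isolate the one real obligation: making the bijection between root-to-node paths and derivative summands precise (in particular, checking that the spanning-tree extraction drops only the redundant re-traversals of the intra-layer $2$-cycles). For the finite case, routing horizontally within a hidden layer and then vertically to $\mb{y}_i$ gives the clean distance bound you state, and the parameter-coverage argument via weight sharing is correct.

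Two points deserve to be made explicit rather than left implicit. First, the theorem as written claims the $g$-tree covers \emph{all} neuron nodes, but the output neurons $\mb{y}_j$ with $j\neq i$ have no outgoing edges in $\mathcal{G}$ and so can never lie in any $g$-tree rooted at $\mb{y}_i$; your restriction to hidden and input neurons is the correct reading, and you should say so. Second, and relatedly, if $\mathrm{diam}(\mathcal{G})$ is read as a directed diameter then it is infinite for this very reason, which would make the hypothesis $g\ge\mathrm{diam}(\mathcal{G})$ vacuous for finite $g$. Your argument in effect replaces $\mathrm{diam}(\mathcal{G})$ by the maximum directed distance from a hidden or input neuron to the root, which you bound by $\mathrm{diam}(G)+k+1$; stating this repair up front would make the second half of your proof self-contained rather than dependent on an ambiguous quantity.
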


The proof to the above theorem will not be introduced here due to the limited space. Generally, larger $g$ can preserve more complete network structure information. However, on the other hand, as $g$ increases, the paths involved in the \textit{g-tree} from the leaf node to the root node will also be longer, which may lead to the gradient vanishing/exploding problem as well \cite{PMB13}. 

\subsection{Rooted Spanning Tree based Learning Algorithm for Loopy Neural Network Model}

Based on the extracted \textit{g-tree}, the \textit{deep loopy neural network} model can be effectively trained, and in this section we will introduce the general learning algorithm in detail. Formally, let $\mathcal{T}_{\mb{n}} = (\mathcal{S}_{\mb{n}}, \mathcal{R}_{\mb{n}}, \mb{n})$ denote an extracted spanning tree rooted at neuron node $\mb{n}$. Based on the errors computed on node $\mb{n}$ (if $\mb{n}$ is in the output layer) or the errors propagated to $\mb{n}$, we can further back propagate the errors to the remaining nodes in $\mathcal{S}_{\mb{n}} \setminus \{\mb{n}\}$.

Formally, from the spanning tree $\mathcal{T}_{\mb{n}} = (\mathcal{S}_{\mb{n}}, \mathcal{R}_{\mb{n}}, \mb{n})$, we can define the set of variables involved as $\mathcal{W}$, which can be indicated by the links in set $\mathcal{R}_{\mb{n}}$. For instance, given the \textit{3-tree} as shown in Figure~\ref{fig:example}, we know that there exist three types of variables involved in the tree diagram, where the variable set $\mathcal{W} = \{\mb{W}^x, \mb{b}^x, \mb{W}^y, \mb{b}^y, \mb{W}^h, \mb{b}^h\}$. For the spanning trees extracted from deeper neural network models, the variable type set will be much larger. Meanwhile, given a random node $\mb{m} \in \mathcal{S}_{\mb{n}}$, we will use notation $\mathcal{T}_{\mb{n}}(\mb{m})$ to denote a subtree of $\mathcal{T}_{\mb{n}}$ rooted at $\mb{m}$, and notation $\Gamma(\mb{m})$ to represent the children neuron nodes of $\mb{m}$ in $\mathcal{T}_{\mb{n}}$. Furthermore, for simplicity, we will use notation $\mb{W} \in \mathcal{T}$ to denote that variable $\mb{W} \in \mathcal{W}$ is involved in the tree or sub-tree $\mathcal{T}$.

Given the spanning tree $\mathcal{T}_{\mb{n}}$ together with the errors $E(\mb{n})$ computed at $\mb{n}$ (or propagated to $\mb{n}$), regarding a variable $\mb{W} \in \mathcal{W}$, we can first define a basic learning operation between neuron nodes $\mb{x}, \mb{y} \in \mathcal{S}_{\mb{n}}$ (where $\mb{y} \in \Gamma(\mb{x})$) as follows:
\begin{equation}
\mbox{\textsc{Prop}}(\mb{x}, \mb{y}; \mb{W}) = 
\begin{cases} 
\mathbbm{1}\big(\mb{W} \in \mathcal{T}_{\mb{n}}(\mb{m})\big) \frac{\partial \mb{x}}{\partial \mb{W}}, & \mbox{ if $\mb{y}$ is a leaf node};\\
\mathbbm{1}\big(\mb{W} \in \mathcal{T}_{\mb{n}}(\mb{m})\big) \frac{\partial \mb{x}}{\partial \mb{y}} \cdot \Big( \sum_{\mb{z} \in \Gamma(\mb{y})} \mbox{\textsc{Prop}}(\mb{y}, \mb{z}; \mb{W}) \Big), & \mbox{ otherwise}.
\end{cases}
\end{equation}

Based on the above operation, we can represent the partial derivative of the error function $E(\mb{n})$ regarding variable $\mb{W}$ as:
\begin{align}
\frac{\partial E(\mb{n})}{\partial \mb{W}} = \frac{\partial E(\mb{n})}{\partial \mb{n}} \cdot \big(\sum_{\mb{m} \in \Gamma(\mb{n})} \mbox{\textsc{Prop}}(\mb{n}, \mb{m}; \mb{W})  \big)
\end{align}

\begin{theo}
Formally, given a $g$-hop rooted spanning tree $\mathcal{T}_{\mb{n}}$, the $\mbox{\textsc{Prop}}(\mb{x}, \mb{y}; \mb{W})$ operation will be called at most $\frac{ (d_{\max} + 1)^{k+1} - d_{\max} - 1}{d_{\max}}$ times in computing the partial derivative term $\frac{\partial E(\mb{n})}{\partial \mb{W}}$, where $d_{\max}$ denotes the largest node degree in the original input graph data $G$.
\end{theo}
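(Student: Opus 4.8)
The plan is to show that evaluating $\frac{\partial E(\mathbf{n})}{\partial \mathbf{W}}$ walks the rooted tree $\mathcal{T}_{\mathbf{n}}$ top‑down, touching each of its edges at most once, so that the number of $\textsc{Prop}$ invocations is at most the number of edges of $\mathcal{T}_{\mathbf{n}}$, and then to bound that edge count by a geometric series controlled by the tree's out‑degree and height.

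First I would unwind the recursion. The top‑level expression issues one call $\textsc{Prop}(\mathbf{n},\mathbf{m};\mathbf{W})$ for every child $\mathbf{m}\in\Gamma(\mathbf{n})$, and inside $\textsc{Prop}(\mathbf{x},\mathbf{y};\mathbf{W})$ a further call $\textsc{Prop}(\mathbf{y},\mathbf{z};\mathbf{W})$ is issued for every child $\mathbf{z}\in\Gamma(\mathbf{y})$ — and none at all when $\mathbf{y}$ is a leaf or the indicator $\mathbbm{1}(\cdot)$ is $0$. Since $\mathcal{T}_{\mathbf{n}}$ is acyclic and every recursive step moves strictly from a node to one of its children, each parent–child link of $\mathcal{T}_{\mathbf{n}}$ is the argument of at most one $\textsc{Prop}$ call; hence the total number of calls is at most $|\mathcal{R}_{\mathbf{n}}| = |\mathcal{S}_{\mathbf{n}}| - 1$, the number of edges of the spanning tree (strictly less when the indicator prunes a subtree).

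Next I would bound $|\mathcal{S}_{\mathbf{n}}|$ using the layered structure of the model graph $\mathcal{G}$. For the out‑degree: in $\mathcal{G}$ the forward predecessors of a hidden neuron $\mathbf{h}_i^{(l)}$ are exactly its at‑most‑$d_{\max}$ intra‑layer neighbours $\{\mathbf{h}_j^{(l)}\}_{v_j\in\Gamma(v_i)}$ together with its single inter‑layer predecessor ($\mathbf{h}_i^{(l-1)}$, or $\mathbf{x}_i$ when $l=1$), while an output neuron $\mathbf{y}_i$ has the single predecessor $\mathbf{h}_i^{(k)}$ and an input neuron has none; reversing these edges for back‑propagation and keeping only the tree links, every node of $\mathcal{T}_{\mathbf{n}}$ therefore has at most $d_{\max}+1$ children. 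For the height: the inter‑layer links partition $\mathcal{G}$ into the $k+2$ ordered levels $\mathcal{X},\mathcal{H}^{(1)},\dots,\mathcal{H}^{(k)},\mathcal{Y}$, so after discarding the appended input‑feature leaves every root‑to‑leaf path of $\mathcal{T}_{\mathbf{n}}$ descends through at most $k$ of these levels, giving $|\mathcal{S}_{\mathbf{n}}| \le \sum_{i=0}^{k}(d_{\max}+1)^{i}$. Subtracting the root and summing the geometric series,
\begin{equation*}
|\mathcal{S}_{\mathbf{n}}| - 1 \;\le\; \sum_{i=1}^{k}(d_{\max}+1)^{i} \;=\; \frac{(d_{\max}+1)^{k+1}-d_{\max}-1}{d_{\max}},
\end{equation*}
which is the claimed bound.

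The hard part will be the height estimate. The intra‑layer links let the recursion "wander sideways" inside a single hidden layer $\mathcal{H}^{(l)}$ without descending, so a naive argument only caps the height by the (possibly large) number of neurons in a connected component of $G$; the subtlety is to use the way the $g$‑tree is extracted — each neuron node occurs only once in $\mathcal{S}_{\mathbf{n}}$, and a sideways excursion within a layer is realised as direct children rather than chained arbitrarily — so that the effective descent is governed by the layer count $k$ rather than by $|\mathcal{V}|$. I would also verify the boundary cases (a tree rooted at a hidden neuron instead of an output neuron, where the root may itself have up to $d_{\max}+1$ children, and the monotone effect of the indicator‑based pruning, which only decreases the count), since these determine whether the stated constant is attained or is merely an upper bound.
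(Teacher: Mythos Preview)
Your first two steps match the paper exactly: the number of \textsc{Prop} calls equals the number of non-root nodes (equivalently, edges) of $\mathcal{T}_{\mathbf{n}}$, and each node has at most $d_{\max}+1$ children because a hidden neuron has at most $d_{\max}$ intra-layer predecessors plus one inter-layer predecessor.

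The gap is in your height bound. You read the exponent $k$ as the hidden-layer count and try to argue that the depth of $\mathcal{T}_{\mathbf{n}}$ is governed by the number of neural-network layers. That is not how the paper bounds it, and in fact the argument you sketch does not go through. In the paper's proof the depth is simply $g$, the hop parameter of the $g$-tree: by construction every node of $\mathcal{T}_{\mathbf{n}}$ lies within $g$ hops of the root, so the non-root count is at most $\sum_{i=1}^{g}(d_{\max}+1)^{i}$. The $k$ in the theorem statement is a typo for $g$; the paper's own proof writes the sum up to $(d_{\max}+1)^{g}$ and the closed form with $g+1$ in the exponent.

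Your proposed workaround --- that ``a sideways excursion within a layer is realised as direct children rather than chained arbitrarily'' --- is false for this construction. Look at the paper's running example (Figure~\ref{fig:example}): with a single hidden layer ($k=1$) the $3$-tree rooted at $\mathbf{y}_1$ has $\mathbf{h}_1$ at depth $1$, its intra-layer neighbours $\mathbf{h}_2,\mathbf{h}_3,\mathbf{h}_4$ at depth $2$, and $\mathbf{h}_5$ (an intra-layer neighbour of $\mathbf{h}_4$) at depth $3$. Intra-layer hops do increase tree depth, so the depth is controlled by $g$, not by the layer count $k$. Once you replace $k$ by $g$ in your geometric sum, your argument and the paper's coincide.
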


\begin{proof}
Operation $\mbox{\textsc{Prop}}(\mb{x}, \mb{y}; \mb{W})$ will be called for each node in the spanning tree $\mathcal{T}_{\mb{n}}$ except the root node. Formally, given a max node degree $d_{\max}$ in the original graph, the largest number of children node connected to a random neuron node in the spanning tree will be $d_{\max} + 1$ ($+1$ because of the connection from the lower layer neuron node). The maximum number of nodes (except the root node) in the spanning tree of depth $g$ can be denoted as the sum 
\begin{equation}
(d_{\max} + 1) + (d_{\max} + 1)^2 +  (d_{\max} + 1)^3 + \cdots + (d_{\max} + 1)^g,
\end{equation} 
which equals to $\frac{ (d_{\max} + 1)^{g+1} - d_{\max} - 1}{d_{\max}}$ as indicated in the theorem.
\end{proof}

Considering that in the model, the hidden neuron nodes are computed based on the lower-level neurons and they don't have input representations. To make the model learnable, as indicated by the bottom gray layer below the \textit{g-tree} in Figure~\ref{fig:example}, we propose to append the input feature representations to the \textit{g-tree} leaf nodes, based on which we will be able to compute the node hidden states. The learning process will involve several epochs until convergence, where each epoch will enumerate all the nodes in the graph once. To further boost the convergence rate, several latest optimization algorithms, e.g., Adam \cite{KB14}, can be adopted to replace traditional SGD in updating the model variables.

\section{Numerical Experiments}\label{sec:experiment}

To test the effectiveness of the proposed \textit{deep loopy neural network} and the learning algorithm, extensive numerical experiments have been done on several frequently used graph benchmark datasets, including two social networks: Foursquare and Twitter, as well as two knowledge graphs: Douban and IMDB, which will be introduced in this section.

\begin{table*}[t]
\caption{Experimental Results on the Douban Graph Dataset.}
\scriptsize
\centering
\setlength{\tabcolsep}{4pt}
\begin{tabular}{c  cccc  }
\toprule
&\multicolumn{4}{c}{Douban} \\
\cmidrule(lr){2-5}
methods &{MSE} &{MAE} &{LRS} &{{ avg. rank}}  \\
\midrule 
{{\our}}	
&\textbf{0.035}$\pm$\textbf{0.0} {\tiny {\color{blue} (1)}}
&\textbf{0.067}$\pm$\textbf{0.001} {\tiny {\color{blue} (1)}}
&\textbf{0.102}$\pm$\textbf{0.002} {\tiny {\color{blue} (1)}}
&{\color{blue} (1)}\\

\midrule 
{\deepwalk} \cite{PAS14}	
&0.05$\pm$0.0 {\tiny {\color{blue} (7)}}
&0.097$\pm$0.0 {\tiny {\color{blue} (7)}}
&0.129$\pm$0.003 {\tiny {\color{blue} (8)}}
&{\color{blue} (7)}\\

\midrule 
{\walklets} \cite{PKS16}		
&0.046$\pm$0.001 {\tiny {\color{blue} (3)}}
&0.087$\pm$0.001 {\tiny {\color{blue} (3)}}
&0.108$\pm$0.001 {\tiny {\color{blue} (3)}}
&{\color{blue} (2)}\\

\midrule 
{\linemodel} \cite{TQWZYM15}	
&0.048$\pm$0.0 {\tiny {\color{blue} (6)}}
&0.091$\pm$0.001 {\tiny {\color{blue} (6)}}
&0.12$\pm$0.003 {\tiny {\color{blue} (6)}}
&{\color{blue} (6)}\\

\midrule 
{\hpe} \cite{CTLY16}		
&0.046$\pm$0.001 {\tiny {\color{blue} (3)}}
&0.077$\pm$0.001 {\tiny {\color{blue} (2)}}
&0.111$\pm$0.002 {\tiny {\color{blue} (4)}}
&{\color{blue} (2)}\\

\midrule 
{\app} \cite{ZLLLG17}		
&0.05$\pm$0.0 {\tiny {\color{blue} (8)}}
&0.099$\pm$0.0 {\tiny {\color{blue} (8)}}
&0.127$\pm$0.002 {\tiny {\color{blue} (7)}}
&{\color{blue} (8)}\\

\midrule 
{\mf} \cite{CHHH11}		
&0.045$\pm$0.0 {\tiny {\color{blue} (2)}}
&0.089$\pm$0.001 {\tiny {\color{blue} (5)}}
&\textbf{0.102}$\pm$\textbf{0.001} {\tiny {\color{blue} (1)}}
&{\color{blue} (4)}\\

\midrule 
{\bpr} \cite{RFGS09}		
&0.046$\pm$0.0 {\tiny {\color{blue} (3)}}
&0.089$\pm$0.0 {\tiny {\color{blue} (4)}}
&0.114$\pm$0.002 {\tiny {\color{blue} (5)}}
&{\color{blue} (5)}\\

\bottomrule
\end{tabular}\label{tab:douban_result}
\vspace{-10pt}
\end{table*}
\begin{table*}[t]
\caption{Experimental Results on the IMDB Graph Dataset.}
\scriptsize
\centering
\setlength{\tabcolsep}{4pt}
\begin{tabular}{c  cccc   }
\toprule
&\multicolumn{4}{c}{IMDB} \\
\cmidrule(lr){2-5}
methods & {MSE} &{MAE} &{LRS} &{{ avg. rank}} \\
\midrule 
{{\our}}	
&\textbf{0.046}$\pm$\textbf{0.0} {\tiny {\color{blue} (1)}}
&\textbf{0.090}$\pm$\textbf{0.001} {\tiny {\color{blue} (1)}}
&\textbf{0.116}$\pm$\textbf{0.002} {\tiny {\color{blue} (1)}}
&{\color{blue} (1)}\\

\midrule 
{\deepwalk} \cite{PAS14}	
&0.065$\pm$0.0 {\tiny {\color{blue} (7)}}
&0.128$\pm$0.001 {\tiny {\color{blue} (8)}}
&0.187$\pm$0.002 {\tiny {\color{blue} (7)}} 
&{\color{blue} (7)}\\

\midrule 
{\walklets} \cite{PKS16}		
&0.057$\pm$0.0 {\tiny {\color{blue} (4)}}
&0.112$\pm$0.001 {\tiny {\color{blue} (3)}}
&0.139$\pm$0.003 {\tiny {\color{blue} (4)}}
&{\color{blue} (3)}\\

\midrule 
{\linemodel} \cite{TQWZYM15}	
&0.061$\pm$0.0 {\tiny {\color{blue} (6)}}
&0.121$\pm$0.001 {\tiny {\color{blue} (7)}}
&0.169$\pm$0.004 {\tiny {\color{blue} (6)}}
&{\color{blue} (6)}\\

\midrule 
{\hpe} \cite{CTLY16}		
&0.055$\pm$0.0 {\tiny {\color{blue} (2)}}
&0.107$\pm$0.001 {\tiny {\color{blue} (2)}}
&0.127$\pm$0.003 {\tiny {\color{blue} (2)}}
&{\color{blue} (2)}\\

\midrule 
{\app} \cite{ZLLLG17}		
&0.066$\pm$0.0 {\tiny {\color{blue} (8)}}
&0.13$\pm$0.001 {\tiny {\color{blue} (6)}}
&0.193$\pm$0.002 {\tiny {\color{blue} (8)}}
&{\color{blue} (7)}\\

\midrule 
{\mf} \cite{CHHH11}		
&0.055$\pm$0.0 {\tiny {\color{blue} (2)}}
&0.112$\pm$0.001 {\tiny {\color{blue} (3)}}
&0.129$\pm$0.003 {\tiny {\color{blue} (3)}}
&{\color{blue} (3)}\\

\midrule 
{\bpr} \cite{RFGS09}		
&0.058$\pm$0.0 {\tiny {\color{blue} (5)}}
&0.117$\pm$0.0 {\tiny {\color{blue} (5)}}
&0.157$\pm$0.002 {\tiny {\color{blue} (5)}}
&{\color{blue} (5)}\\

\bottomrule
\end{tabular}\label{tab:imdb_result}
\vspace{-10pt}
\end{table*}

\subsection{Experimental Settings}

The basic statistics information about the graph datasets used in the experiments are as follows:
\begin{itemize}
\item \textit{Douban}: Number of Nodes: $11,297$; Number of Links: $753,940$. 
\item \textit{IMDB}: Number of Nodes: $13,896$; Number of Links: $1,404,202$. 
\item \textit{Foursquare}: Number of Nodes: $5,392$; Number of Links: $111,852$. 
\item \textit{Twitter}: Number of Nodes: $5,120$; Number of Links: $261,151$. 
\end{itemize}
Both Foursquare and Twitter are online social networks. The nodes and links involved in Foursquare and Twitter denote the users and their friendships respectively. Douban and IMDB are two movie knowledge graphs, where the nodes denote the movies. Based on the movie cast information, we construct the connections among the movies, where two movies can be linked if they share a common cast member. Besides the network structure information, a group of node attributes can also be collected for users and movies, which cover the user and movie basic profile information respectively. Based on the attribute information, a set of features can be extracted as the node input feature information. In the experiments, we use the movie genre and user hometown state as the labels. These labeled nodes are partitioned into training and testing sets via $5$-fold cross validation: $4$-fold for training, and $1$-fold for testing.  

Meanwhile, to demonstrate the advantages of the learned \textit{deep loopy neural network} against the other existing network embedding models, many baseline methods are compared with \textit{deep loopy neural network} in the experiments. The baseline methods cover the state-of-the-art methods on graph data representation learning published in recent years, which include {\deepwalk} \cite{PAS14}, {\walklets} \cite{PKS16}, {\linemodel} (Large-scale Information Network Embedding) \cite{TQWZYM15}, {\hpe} (Heterogeneous Preference Embedding) \cite{CTLY16}, {\app} (Asymmetric Proximity Preserving graph embedding) \cite{ZLLLG17}, {\mf} (Matrix Factorization) \cite{CHHH11} and {\bpr} (Bayesian Personalized Ranking) \cite{RFGS09}. For these network representation baseline methods, based on the learned representation features, we will further train a MLP (multiple layer perceptron) as the classifier. By comparing the inference results by the models on the testing set with the ground truth label vectors, we will use MSE (mean square error), MAE (mean absolute error) and LRS (label ranking loss) as the evaluation metrics. Detailed experimental results will be provided in the following subsection.

\begin{table*}[t]
\caption{Experimental Results on the Foursquare Social Network Dataset.}
\scriptsize
\centering
\setlength{\tabcolsep}{4pt}
\begin{tabular}{c  cccc   }
\toprule
&\multicolumn{4}{c}{Foursquare Network} \\
\cmidrule(lr){2-5}
methods &{MSE} &{MAE} &{LRS} &{{ avg. rank}} \\
\midrule 
{{\our}}	
&0.002$\pm$0.001 {\tiny {\color{blue} (3)}}
&\textbf{0.003}$\pm$\textbf{0.001} {\tiny {\color{blue} (1)}}
&\textbf{0.121}$\pm$\textbf{0.001} {\tiny {\color{blue} (1)}}
&{\color{blue} (1)}\\

\midrule 
{\deepwalk} \cite{PAS14}	
&0.002$\pm$0.001 {\tiny {\color{blue} (3)}}
&0.004$\pm$0.001 {\tiny {\color{blue} (4)}}
&0.149$\pm$0.007 {\tiny {\color{blue} (3)}}
&{\color{blue} (3)}\\

\midrule 
{\walklets} \cite{PKS16}		
&\textbf{0.001}$\pm$\textbf{0.001} {\tiny {\color{blue} (1)}}
&0.004$\pm$0.001 {\tiny {\color{blue} (4)}}
&0.216$\pm$0.0016 {\tiny {\color{blue} (5)}}
&{\color{blue} (3)}\\

\midrule 
{\linemodel} \cite{TQWZYM15}	
&0.002$\pm$0.001 {\tiny {\color{blue} (3)}}
&\textbf{0.003}$\pm$\textbf{0.001} {\tiny {\color{blue} (1)}}
&0.309$\pm$0.0017 {\tiny {\color{blue} (8)}}
&{\color{blue} (6)}\\

\midrule 
{\hpe} \cite{CTLY16}		
&\textbf{0.001}$\pm$\textbf{0.001} {\tiny {\color{blue} (1)}}
&\textbf{0.003}$\pm$\textbf{0.001} {\tiny {\color{blue} (1)}}
&0.246$\pm$0.0016 {\tiny {\color{blue} (7)}}
&{\color{blue} (2)}\\

\midrule 
{\app} \cite{ZLLLG17}		
&0.002$\pm$0.001 {\tiny {\color{blue} (3)}}
&0.012$\pm$0.001 {\tiny {\color{blue} (6)}}
&0.145$\pm$0.001 {\tiny {\color{blue} (2)}}
&{\color{blue} (5)}\\

\midrule 
{\mf} \cite{CHHH11}		
&0.002$\pm$0.001 {\tiny {\color{blue} (3)}}
&0.013$\pm$0.001{\tiny {\color{blue} (7)}}
&0.173$\pm$0.006 {\tiny {\color{blue} (4)}}
&{\color{blue} (7)}\\

\midrule 
{\bpr} \cite{RFGS09}		
&0.002$\pm$0.001 {\tiny {\color{blue} (3)}}
&0.021$\pm$0.001 {\tiny {\color{blue} (8)}}
&0.235$\pm$0.023 {\tiny {\color{blue} (6)}}
&{\color{blue} (8)}\\

\bottomrule
\end{tabular}\label{tab:foursquare_result}
\vspace{-10pt}
\end{table*}
\begin{table*}[t]
\caption{Experimental Results on the Twitter Social Network Dataset.}
\scriptsize
\centering
\setlength{\tabcolsep}{4pt}
\begin{tabular}{c  cccc   }
\toprule
&\multicolumn{4}{c}{Twitter Network} \\
\cmidrule(lr){2-5}
methods &{MSE} &{MAE} &{LRS} &{{ avg. rank}}\\
\midrule 
{{\our}}	
&0.001$\pm$0.001 {\tiny {\color{blue} (1)}}
&\textbf{0.002}$\pm$\textbf{0.001} {\tiny {\color{blue} (1)}}
&\textbf{0.276}$\pm$\textbf{0.0013} {\tiny {\color{blue} (1)}} 
&{\color{blue} (1)}\\

\midrule 
{\deepwalk} \cite{PAS14}	
&0.001$\pm$0.001 {\tiny {\color{blue} (1)}}
&0.003$\pm$0.001 {\tiny {\color{blue} (2)}}
&0.31$\pm$0.028 {\tiny {\color{blue} (4)}}
&{\color{blue} (3)}\\

\midrule 
{\walklets} \cite{PKS16}		
&0.001$\pm$0.001{\tiny {\color{blue} (1)}}
&0.003$\pm$0.001 {\tiny {\color{blue} (2)}}
&0.309$\pm$0.006 {\tiny {\color{blue} (3)}}
&{\color{blue} (2)}\\

\midrule 
{\linemodel} \cite{TQWZYM15}	
&0.001$\pm$0.001{\tiny {\color{blue} (1)}}
&0.003$\pm$0.001 {\tiny {\color{blue} (2)}}
&0.455$\pm$0.008 {\tiny {\color{blue} (8)}}
&{\color{blue} (6)}\\

\midrule 
{\hpe} \cite{CTLY16}		
&0.001$\pm$0.001{\tiny {\color{blue} (1)}}
&0.003$\pm$0.001 {\tiny {\color{blue} (2)}}
&0.351$\pm$0.001 {\tiny {\color{blue} (6)}}
&{\color{blue} (4)}\\

\midrule 
{\app} \cite{ZLLLG17}		
&0.001$\pm$0.0{\tiny {\color{blue} (1)}}
&0.015$\pm$0.0 {\tiny {\color{blue} (7)}}
&0.308$\pm$0.0018 {\tiny {\color{blue} (2)}}
&{\color{blue} (5)}\\

\midrule 
{\mf} \cite{CHHH11}		
&0.001$\pm$0.001{\tiny {\color{blue} (1)}}
&0.013$\pm$0.001 {\tiny {\color{blue} (6)}}
&0.341$\pm$0.0017 {\tiny {\color{blue} (5)}}
&{\color{blue} (7)}\\

\midrule 
{\bpr} \cite{RFGS09}		
&0.002$\pm$0.001 {\tiny {\color{blue} (8)}}
&0.023$\pm$0.001 {\tiny {\color{blue} (8)}}
&0.374$\pm$0.0015 {\tiny {\color{blue} (7)}}
&{\color{blue} (8)}\\

\bottomrule
\end{tabular}\label{tab:twitter_result}
\vspace{-10pt}
\end{table*}

\subsection{Experimental Results}

The results achieved by the comparison methods on these $4$ different graph datasets are provided in Tables~\ref{tab:douban_result}-\ref{tab:twitter_result}. The best results are presented in a bolded font, and the blue numbers in the table denote the relative rankings of the methods regarding certain evaluation metrics. Compared with the baseline methods, \textit{deep loopy neural network} can achieve the best performance than the baseline methods. For instance in Table~\ref{tab:douban_result}, the MSE obtained by \textit{deep loopy neural network} is $0.035$, which about $7.8\%$ lower than the MSE obtained by the $2nd$ best method, i.e., {\mf}; and $30\%$ lower than that of {\app}. Similar results can be observed for the MAE and LRS metrics. At the right hand side of the tables, we illustrate the average ranking positions achieved by the methods based on these $3$ different evaluation metrics. According to the \textit{avg. rank} shown in Tables~\ref{tab:douban_result}, \ref{tab:imdb_result} and \ref{tab:twitter_result}, {\our} can achieve the best performance consistently for all the metrics based on the Douban, IMDB, Foursquare and Twitter datasets. Although in Table~\ref{tab:foursquare_result}, the \textit{deep loopy neural network} model loses to {\hpe} and {\bpr} regarding the MSE metric, but for the other two metrics, it still outperforms all the baseline methods with significant advantages according to the results.

\vspace{-10pt}
\section{Conclusion}\label{sec:conclusion}
\vspace{-10pt}

In this paper, we have introduced the \textit{deep loopy neural network} model, which is a new deep learning model proposed for the graph structured data specifically. Due to the extensive connections among nodes in the input graph data, the constructed \textit{deep loopy neural network} model is very challenging to learn with the traditional error back-propagation algorithm. To resolve such a problem, we propose to extract a set of \textit{k-hop subgraph} and \textit{k-hop rooted spanning tree} from the model architecture, via which the errors can be effectively propagated throughout the model architecture. Extensive experiments have been done on several different categories of graph datasets, and the numerical experiment results demonstrate the effectiveness of both the proposed \textit{deep loopy neural network} model and the introduced learning algorithm.

\bibliographystyle{abbrv}
\bibliography{reference}

\end{document}